\documentclass[letterpaper, 10 pt, conference]{ieeeconf}  

\IEEEoverridecommandlockouts                              

\usepackage{amsmath,amsfonts,amssymb,amsthm}
\usepackage{algorithm}
\usepackage{algpseudocode}
\usepackage{array}
\usepackage[caption=false,font=normalsize,labelfont=sf,textfont=sf]{subfig}
\usepackage{textcomp}
\usepackage{url}
\usepackage{verbatim}
\usepackage{graphicx}
\def\BibTeX{{\rm B\kern-.05em{\sc i\kern-.025em b}\kern-.08em
    T\kern-.1667em\lower.7ex\hbox{E}\kern-.125emX}}
\usepackage{balance}
\usepackage{hyperref}
\usepackage{xcolor}
\usepackage{multirow}
\usepackage{mathtools}
\usepackage{color,soul}
\usepackage{amsthm}
\usepackage{tikz}
\usetikzlibrary{arrows.meta, positioning, shapes.geometric, shapes.multipart, calc, fit, backgrounds}

\tikzset{arrow/.style={-Latex, line width=0.5pt}}

\tikzset{
  block/.style={
    draw, thick, rounded corners=1.5mm,
    align=center,
    minimum width=16mm,      
    minimum height=7mm,      
    inner xsep=2mm,          
    inner ysep=1.5mm,        
    font=\sffamily\footnotesize  
  },
  line/.style={-Latex, thick}
}

\tikzset{
  dashedbox/.style={
    draw, dashed, rounded corners=3mm, thick,
    inner sep=6pt
  }
}

\theoremstyle{plain}
\newtheorem{theorem}{Theorem}
\newtheorem{proposition}{Proposition}

\theoremstyle{definition}
\newtheorem{definition}{Definition}
\theoremstyle{remark}
\newtheorem{remark}{Remark}

\title{\LARGE \bf
Airspeed Forward-Invariance for Unpowered Fixed-Wing Aircraft
}

\author{H. Emre Tekaslan$^{1}$ and Ella M. Atkins $^{2}$
\thanks{$^{1}$H. Emre Tekaslan is with the Kevin T. Crofton Department of Aerospace and Ocean Engineering,
        Virginia Polytechnic Institute and State University, Blacksburg, VA 24060, USA,
        {Corresponding author: \tt\small tekaslan@vt.edu}}%
\thanks{$^{2}$Ella M. Atkins is with the Kevin T. Crofton Department of Aerospace and Ocean Engineering,
        Virginia Polytechnic Institute and State University,
        Blacksburg, VA 24060, USA,
        {\tt\small ematkins@vt.edu}}%
}

\begin{document}

\maketitle
\thispagestyle{empty}
\pagestyle{empty}

\begin{abstract}
Autonomous fixed-wing flight is becoming a key capability in aerial robotics, enabling sensing, mobility, and contingency operations across both small-scale Uncrewed Aircraft Systems and large-scale Advanced Air Mobility. During unpowered operation in fixed-wing platforms, airspeed is regulated solely through potential–kinetic energy exchange, making airspeed dynamics highly sensitive to guidance commands, particularly under wind. This paper presents a viability-based airspeed protection for ground-referenced guidance in steady wind, where airspeed evolution depends explicitly on the commanded flight path angle. Leveraging Nagumo’s tangency condition, we derive a closed-form, wind-dependent characterization of admissible guidance commands that guarantees forward invariance of a safe airspeed envelope. These conditions are embedded within an offline quadratic programming framework to certify airspeed-safe maneuver primitives for non-ascending flight at the guidance level. The approach is validated using a high-fidelity unpowered fixed-wing aircraft model on gliding trajectories formed by concatenating certified maneuver primitives, demonstrating strict airspeed boundedness. Future work will address unsteady wind fields and flight experiments.
\end{abstract}
\section{Introduction}
\label{sec:intro}
Uncrewed Aircraft Systems (UAS) are increasingly deployed in aerial robotics applications, including small UAS for surveillance and robot team sensing \cite{fi13070174, 8979150, 7914761}, as well as Advanced Air Mobility (AAM) platforms that may rely on gliding modes to extend range or execute safe low-energy landings \cite{akinola2025markov}. In both settings, safe autonomous operation critically depends on maintaining aerodynamic feasibility throughout the mission. Maintaining airspeed within prescribed bounds is therefore fundamental: insufficient airspeed compromises lift generation and controllability, while excessive airspeed may violate structural or operational limits \cite{Raymer2012}. For unpowered or power-degraded fixed-wing autonomous UAS—such as engine-out emergency landing scenarios \cite{tekaslan_search}, ranging from lightweight deployable gliders \cite{Kahn2019, 7152281} to AAM vehicles comparable in size to light general aviation—airspeed regulation becomes an explicit energy-management problem. In these platforms, wind can constitute a nontrivial component of total airspeed, making it critical to distinguish between ground-referenced guidance and air-relative motion. In the absence of propulsion, airspeed evolves solely through gravitational potential energy exchange and aerodynamic dissipation, rendering it highly sensitive to guidance decisions that shape flight path geometry. Consequently, guidance commands that are kinematically feasible in the ground frame may nevertheless induce unsafe airspeed excursions, particularly under wind. Because commands such as flight path angle or course changes are typically specified in ground coordinates, even time-invariant guidance can produce nontrivial airspeed dynamics depending on wind magnitude, direction, and maneuver geometry. This coupling renders autonomous unpowered UAS guidance a safety-critical aerial robotics problem requiring formal feasibility guarantees.

Viability theory provides a rigorous framework for reasoning about forward invariance of constraint sets in dynamical systems \cite{Aubin1991}. Nagumo’s tangency condition gives a necessary and sufficient local criterion for forward invariance by requiring the system’s vector field to point inward at the boundary of the constraint set. Viability-based methods have been applied primarily to compute or approximate viability kernels in state space systems using discretization or set-based techniques \cite{liu2019flexibility, bouguerra2018viability, muhammad:hal-01143861}, particularly for robot collision avoidance. Control Barrier Functions (CBFs) \cite{cbf,cbf2} reinterpret Nagumo’s condition as inequality constraints on control inputs, enabling real-time safety enforcement through online optimization \cite{ames2021} or graph search \cite{8910394}. CBFs have been integrated into motion planning and safety-critical control frameworks for collision avoidance \cite{ahmad2022adaptive, manjunath2021safe, mestres2025safe, thirugnanam2022safety}, flight envelope protection \cite{autenrieb2025quadratic, mesbah2024}, and autonomous landing on a moving platform \cite{9507289}.

These approaches primarily enforce safety at the control level through online CBF-constrained quadratic programs or reactive safety filters, and do not explicitly characterize a priori which guidance-level maneuvers are dynamically admissible. For unpowered UAS, the coupling between ground-referenced guidance and wind implies that while airspeed safety can be enforced at the control level, as demonstrated in prior flight envelope protection studies \cite{autenrieb2025quadratic, mesbah2024} using CBFs, safety has not been explicitly characterized at the guidance or planning level. This distinction is critical for anytime and real-time path planning, where candidate plans must be dynamically executable as generated without reliance on computationally expensive post-processing, online simulation, or control modification.

A prior work \cite{tekaslan2026feasibility} explored guidance feasibility under wind but did not provide formal guarantees. This paper addresses these gaps by formulating airspeed safety for unpowered fixed-wing UAS as a guidance-level viability problem under steady horizontal wind. Rather than enforcing safety through online control filtering, Nagumo’s tangency condition is applied directly to the airspeed dynamics induced by ground-referenced guidance commands. This yields a closed-form, wind-dependent characterization of admissible ground-referenced flight path angle commands that guarantees forward invariance of a prescribed airspeed envelope. These conditions are embedded within an offline optimization framework to synthesize airspeed-safe discrete maneuvers that can be queried by a real-time path planner.

This paper offers two contributions.  First, a viability analysis is performed to derive wind-dependent admissible sets of ground-referenced flight path angle commands guaranteeing airspeed boundedness for unpowered fixed-wing UAS. Second, airspeed-safe discrete maneuver primitives are synthesized offline to enable direct integration into real-time and anytime path planning without the need for online feasibility repair or control modification.

The remainder of the paper is organized as follows. Section~\ref{sec:prelim} reviews unpowered UAS airspeed dynamics and the coupling between ground-referenced guidance and airspeed under wind. Section~\ref{sec:method} develops a viability-theoretic forward invariance analysis and embeds the resulting conditions into an optimal guidance formulation. Section~\ref{sec:numeric} details numerical implementation and computational considerations. Section~\ref{sec:application} applies the framework to a six-degree-of-freedom (6-DoF) unpowered Cessna~182 model and evaluates viability-constrained guidance solutions. Finally, the guidance policy is integrated with a path planner and validated through high-fidelity dynamic simulations.
\section{Unpowered Fixed-wing UAS Model}
\label{sec:prelim}
This paper models an unpowered fixed-wing UAS as a point mass subject to aerodynamic lift and drag, as shown in Figure \ref{fig:fbd}, operating in a steady, uniform horizontal wind. Let $v_a \in \mathbb{R}^+$ denote the airspeed magnitude, defined as the norm of the air-relative velocity vector $\mathbf{v}_a \in \mathbb{R}^3$. The air-relative and ground-referenced flight path angles are respectively denoted by $\gamma_a, \gamma_g \in \Gamma$, measured negative downward from the local horizontal where $\Gamma \subseteq \left ( -\frac{\pi}{2}, \frac{\pi}{2} \right)$ defines an admissible flight path angle set. Aircraft mass is $m\in \mathbb{R}^+$, $g\in \mathbb{R}^+$ is gravitational acceleration, and weight is $W = mg$.
\begin{figure}[t!]
    \centering
    \includegraphics[width=.8\linewidth]{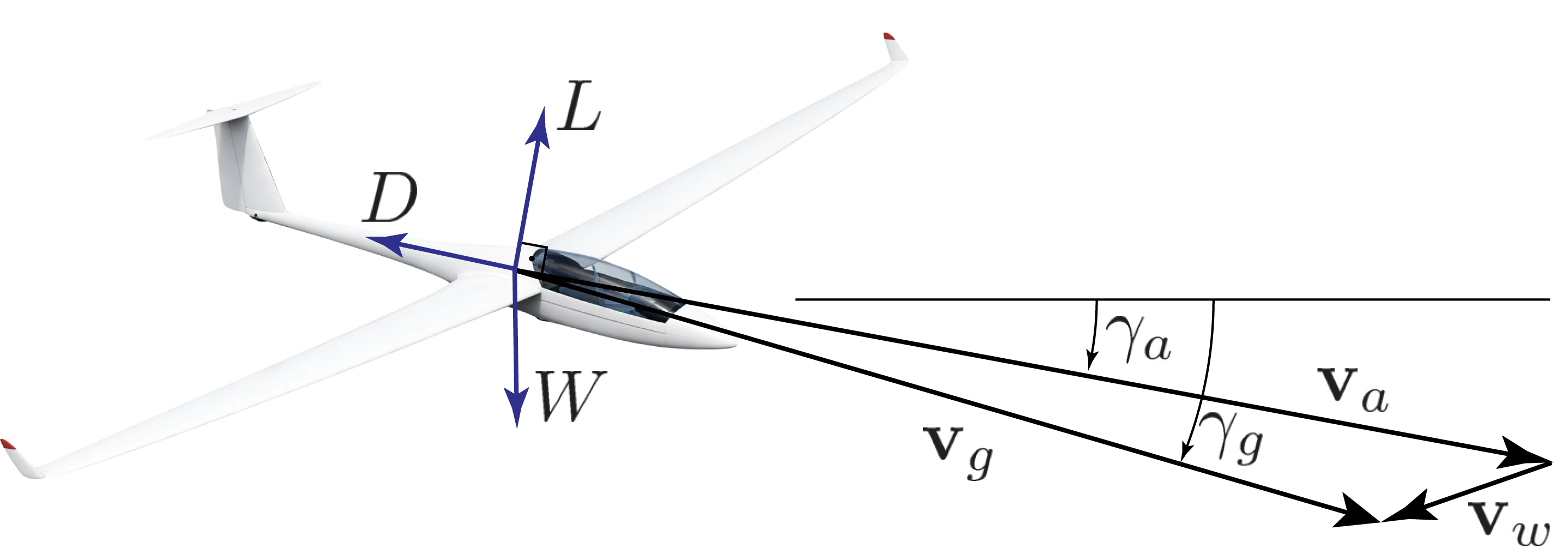}
    \caption{Fundamental forces and velocity vectors of an unpowered fixed-wing UAS shown in the vertical plane.}
    \label{fig:fbd}
\end{figure}

\subsection{Airspeed Dynamics}
The longitudinal dynamics of airspeed are obtained by projecting forces along the air-relative velocity direction. For an unpowered aircraft, the airspeed dynamics are given by
\begin{equation}
m \dot{v}_a = -D(v_a,\gamma_a) - W\sin \gamma_a,
\label{eq:airspeed_dyn}
\end{equation}
where $D$ denotes aerodynamic drag and can be modeled using a quadratic drag polar \cite{Raymer2012},
\begin{equation}
D = \frac{1}{2} \rho S v_a^2 \left( C_{D0} + k C_L^2 \right),
\label{eq:drag_polar}
\end{equation}
where \(\rho\) is air density, \(S\) is the wing planform area, \(C_{D0}\) is the zero-lift drag coefficient, \(k\) is the lift-induced drag factor, and \(C_L\) is the lift coefficient. In a coordinated turn at constant bank angle $\mu \in \mathbb{R}$, lift balances the component of weight normal to the flight path,
\begin{equation}
L = W\frac{\cos \gamma_a}{\cos \mu}.
\label{eq:lift_balance}
\end{equation}
Using the lift relation \(L = \tfrac{1}{2} \rho S v_a^2 C_L\) \cite{Raymer2012}, the lift coefficient can be expressed as
\begin{equation}
C_L = \frac{2W}{\rho S v_a^2} \frac{\cos \gamma_a}{\cos \mu}.
\label{eq:CL}
\end{equation}
Substituting (\ref{eq:CL}) into (\ref{eq:drag_polar}) yields drag as a function of airspeed and air-relative flight path angle for a given bank angle,
\begin{equation}
D(v_a,\gamma_a)
= \tfrac{1}{2} \rho S C_{D0} v_a^2
+ \frac{2 k W^2}{\rho S v_a^2}
\left( \frac{\cos \gamma_a}{\cos \mu} \right)^2.
\label{eq:drag_explicit}
\end{equation}
Dividing (\ref{eq:airspeed_dyn}) by \(m\), the airspeed acceleration $\dot{v}_a \in \mathbb{R}$ can be written as:
\begin{equation}
\dot{v}_a
= -\frac{D(v_a,\gamma_a)}{m} - g \sin \gamma_a.
\label{eq:va_dot}
\end{equation}
Equation (\ref{eq:va_dot}) shows that airspeed evolution is governed by the balance between gravity-induced acceleration along the flight path and aerodynamic dissipation, both of which depend on air-relative flight path angle \(\gamma_a\).

\subsection{Ground-Referenced Guidance and Wind Coupling}
\label{sec:wind-coupling}
Let \(\mathbf{v}_g \in \mathbb{R}^3\) denote the ground-relative velocity vector and \(\mathbf{v}_w \in \mathbb{R}^3\) the wind velocity vector, assumed constant and horizontal. The airspeed satisfies
\begin{equation}
\mathbf{v}_a = \mathbf{v}_g - \mathbf{v}_w.
\label{eq:velocity_relation}
\end{equation}
The ground-referenced flight path angle \(\gamma_g\) is defined as
\begin{equation}
\gamma_g = \sin^{-1}\!\left( \frac{v_{g,z}}{\lvert \mathbf{v}_g \rvert} \right),
\label{eq:gamma_g}
\end{equation}
where \(v_{g,z}\) is the vertical component of the ground-relative velocity. Air-relative flight path angle \(\gamma_a\) is given by
\begin{equation}
\gamma_a = \sin^{-1}\!\left( \frac{v_{a,z}}{\lvert \mathbf{v}_a \rvert} \right).
\label{eq:gamma_a}
\end{equation}
With a horizontal steady wind assumption, the vertical velocity components satisfy $v_{a,z} = v_{g,z}$. However, the magnitudes $v_a$ and $v_g$ differ due to wind, implying that $\gamma_a \neq \gamma_g$ in general. Consequently, commanding a ground-referenced flight path angle $\gamma_g$ implicitly determines the air-relative flight path angle $\gamma_a$ through (\ref{eq:velocity_relation}). We can also use direct mapping $\mathcal{G}:\Gamma\rightarrow\Gamma$ from $\gamma_a$ to $\gamma_g$ per \eqref{eq:gamma_g2a} \cite{beard_mclain}.
\begin{equation}
    \gamma_g = \mathcal{G}(\gamma_a;\mathbf{v}_a, \mathbf{v}_w) = \sin^{-1}\left( \frac{v_a\sin\gamma_a - v_{w,z}}{\lvert \mathbf{v}_a + \mathbf{v}_w  \rvert} \right)
    \label{eq:gamma_g2a}
\end{equation}

Since \(\gamma_a\) enters the airspeed dynamics (\ref{eq:va_dot}) both through the gravitational term and through the drag force, ground-referenced guidance directly influences airspeed evolution despite not regulating it explicitly. In the presence of wind, even time-invariant commands can therefore induce airspeed acceleration, depending on wind magnitude, direction, and maneuver geometry.

\section{Methodology}
\label{sec:method}
This section first develops the viability analysis on a reduced-order point-mass fixed-wing UAS model for which the airspeed dynamics admit an explicit dependence on the ground-referenced flight path angle. This structure enables derivation of a closed-form viable set using Nagumo’s tangency condition, providing an interpretable and analytical characterization of admissible guidance commands. This approach provides a mathematically rigorous basis for certifying airspeed safety under wind-coupled guidance laws. Subsequently, the same Nagumo-based construction is then applied to a nonlinear UAS model. In this case, a closed-form expression for the viable set is no longer available due to the increased model complexity. Instead, the tangency condition is incorporated directly into an optimal control problem as nonlinear constraints. The resulting guidance law is selected to satisfy Nagumo’s condition for the nonlinear dynamics, yielding a numerically characterized viable optimal guidance policy without relying on an explicit analytical solution.

\subsection{Airspeed Viability Set}
Let $f:\Omega \rightarrow \mathbb{R}$ be a shorthand notation for the airspeed dynamics per \eqref{eq:va_dot} such that $f = \dot{v}_a$ where $\Omega \triangleq \mathbb{R^+}\times\Gamma \subset \mathbb{R}^2$. Also, let $\mathcal{V} \subset \mathbb{R}^+$ be a compact admissible airspeed set \eqref{eq:Vset}:
\begin{equation}
\mathcal{V} \triangleq \{ v_a \in \mathbb{R}^+ \mid 0 < v_{\min} \le v_a \le v_{\max} < \infty \}.
\label{eq:Vset}
\end{equation}
The objective is to ensure that trajectories initialized in $\mathcal{V}$ remain in $\mathcal{V}$ for all future time. To satisfy this goal rigorously, we must characterize directions that keep airspeed inside $\mathcal{V}$ at its boundary. This leads us to introduce forward invariance of $\mathcal{V}$ that is characterized using the contingent cone, also known as the  Bouligand tangent cone \cite{Aubin1991}.
\begin{definition}[Contingent Cone]
    The contingent cone to \(\mathcal{V}\) at a point \(v_a \in \mathcal{V}\), denoted \(T_{\mathcal{V}}(v_a)\), is defined as
    \begin{equation}
    T_{\mathcal{V}}(v_a)
    \triangleq
    \left\{
    \eta \in \mathbb{R}
    \middle|
    \lim_{h \rightarrow 0^+} \inf
    \frac{d_\mathcal{V}(v_a + h\eta)}{h}
    = 0
    \right\},
    \label{eq:contingent_cone}
    \end{equation}
    where $\eta \in \mathbb{R}$ is a candidate direction, $h>0$ is a scalar step size, and $d_\mathcal{V}(x)$ expresses the distance of $x \in \mathbb{R}$ to $\mathcal{V}$ as:
    \begin{equation}
        d_{\mathcal{V}}(x) = \inf_{z \in \mathcal{V}}\; \lvert x - z\rvert.
    \end{equation}
    For the interval $\mathcal{V}$, the contingent cone takes the form
    \begin{equation}
    T_{\mathcal{V}}(v_a) =
    \begin{cases}
    \mathbb{R}, & v_a \in (v_{\min}, v_{\max})\\
    [0, +\infty), & v_a = v_{\min} \\
    (-\infty, 0], & v_a = v_{\max}
    \end{cases}
    .
    \label{eq:contingent_cone_interval}
    \end{equation}
\end{definition}
In simple terms, the contingent cone is the set of admissible instantaneous airspeed rates that keep the airspeed inside $\mathcal{V}$. We now define a special case of $\mathcal{V}$.
\begin{definition}[Viability Domain]
    $\mathcal{V}$ is a viability domain (i.e., forward invariant set) of the map $f: \Omega \rightarrow \mathbb{R}$ if
    \begin{equation}
        f(v_a,\gamma_a) \in T_{\mathcal{V}}(v_a), \quad \forall v_a \in \mathcal{V}.
        \label{eq:nagumo_condition}
    \end{equation}
\end{definition}
The Nagumo theorem provides a necessary and sufficient condition \cite{Aubin1991} for viability of \(\mathcal{V}\) under $f$:
\begin{theorem}
    Given locally compact $\mathcal{V}$ and continuous dynamics $\dot{v}_a = f(v_a,\gamma_a)$ from $\Omega$ to $\mathbb{R}$, $\mathcal{V}$ is locally viable under $f$ if and only if $\mathcal{V}$ is a viability domain of $f$.
\end{theorem}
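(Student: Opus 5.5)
Since $\mathcal{V}$ is an interval in $\mathbb{R}$, I would prove the Nagumo characterization directly, with $\gamma_a$ held fixed (or a continuous function of $v_a$). Then $\dot v_a = f(v_a,\gamma_a)$ is a continuous scalar ODE on $\mathbb{R}^+$. I interpret ``locally viable'' as: for every $v_0 \in \mathcal{V}$ there is $T>0$ and a solution with $v_a(0)=v_0$ and $v_a(t)\in\mathcal{V}$ on $[0,T]$. By \eqref{eq:contingent_cone_interval}, the viability-domain condition \eqref{eq:nagumo_condition} reduces to two sign conditions: $f(v_{\min},\gamma_a)\ge 0$ and $f(v_{\max},\gamma_a)\le 0$. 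Interior points impose no constraint.

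\emph{Necessity.} Assume local viability and let $v_0\in\mathcal{V}$. Take a viable solution $v_a(\cdot)$ on $[0,T]$. Continuity of $f$ gives $v_a\in C^1$, so
\begin{equation*}
\frac{v_a(h)-v_0}{h}\to f(v_0,\gamma_a) \quad \text{as } h\to0^+.
\end{equation*}
Because $v_a(h)\in\mathcal{V}$, we have $d_{\mathcal{V}}(v_0+hf)\le |v_0+hf - v_a(h)| = o(h)$. Hence $\liminf_{h\to0^+} d_{\mathcal{V}}(v_0+hf)/h = 0$, that is, $f(v_0,\gamma_a)\in T_{\mathcal{V}}(v_0)$. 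For example, at $v_{\min}$ the inclusion $v_a(h)\ge v_{\min}$ forces $f\ge 0$.

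\emph{Sufficiency.} This is the harder direction, since $f$ is only assumed continuous, so solutions may be non-unique and a bad solution could leave $\mathcal{V}$. The remedy is to construct a good one. Define the clipped field
\begin{equation*}
\tilde f(v)=f(\pi_{\mathcal{V}}(v),\gamma_a), \qquad \pi_{\mathcal{V}}(v)=\min\{\max\{v,v_{\min}\},v_{\max}\}.
\end{equation*}
It is continuous and bounded on $\mathbb{R}$, so Peano's theorem gives a global solution $\tilde v$ from $v_0$. I then show $\tilde v$ stays in $\mathcal{V}$. Suppose $\tilde v(t_1)>v_{\max}$. Let $t_0=\sup\{t<t_1 : \tilde v(t)\le v_{\max}\}$. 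On $(t_0,t_1]$ we have $\tilde f(\tilde v)=f(v_{\max},\gamma_a)\le 0$, so $\tilde v$ is nonincreasing there. This gives $\tilde v(t_1)\le \tilde v(t_0)=v_{\max}$, a contradiction. The case $v_{\min}$ is symmetric. On $\mathcal{V}$ we have $\tilde f=f$, so $\tilde v$ is a viable solution of the original dynamics, which proves local (indeed global) viability.

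The main obstacle is this non-uniqueness issue in sufficiency. The clipping construction handles it. If $f$ is locally Lipschitz in $v_a$, which holds for \eqref{eq:va_dot} since $v_{\min}>0$ keeps the drag term smooth, the argument also shows that \emph{every} solution starting in $\mathcal{V}$ remains in $\mathcal{V}$. This is the forward-invariance property used afterward. Alternatively, one can cite Nagumo's theorem as stated in \cite{Aubin1991} for locally compact sets and continuous maps, and check only that $\mathcal{V}$ (compact) and $f$ (continuous on $\Omega$, because $v_a>0$ and $\cos\mu\neq0$) meet its hypotheses.
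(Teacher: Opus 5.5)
Your proof is correct, and it takes a different route from the paper. The paper gives no proof of this statement: it imports it verbatim as Nagumo's theorem from \cite{Aubin1991}, whose general proof builds approximate solutions from the tangency condition and passes to the limit by a compactness argument. You instead exploit the fact that $\mathcal V$ is a compact interval. Necessity follows from the estimate $d_{\mathcal V}(v_0+hf)\le|v_0+hf-v_a(h)|=o(h)$. Sufficiency follows from the continuous clipped extension $f\circ\pi_{\mathcal V}$, Peano's theorem, and a first-exit monotonicity argument.

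The two approaches buy different things. The citation covers arbitrary locally compact $\mathcal V$, which is the literal hypothesis of the statement, whereas your argument is limited to intervals. That is the only case the paper uses, and in exchange your argument is self-contained and gives global rather than merely local viability. It also carries over verbatim to a continuous time-varying $\gamma_a(t)$, provided the boundary sign conditions hold for all $t$; this is the situation in the turning maneuvers of Proposition~\ref{prop:viable_opt_policy}.

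Your Lipschitz remark, which makes \emph{every} solution remain in $\mathcal V$, is worth keeping. Nagumo's theorem with mere continuity only guarantees the existence of some viable solution. The paper's proof of Proposition~\ref{prop:viable_opt_policy} concludes that \emph{any} trajectory initialized in $\mathcal V$ stays there, so it implicitly relies on exactly the uniqueness you supply.
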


In the context of ground-referenced guidance, the vector field \(f(v_a,\gamma_a)\) is not directly specified but is induced by guidance commands of $\gamma_g$ per \eqref{eq:gamma_g2a} through wind-dependent relationships. Thus, verifying \eqref{eq:nagumo_condition} requires characterizing how autopilot reference $\gamma_g$ for path tracking influences the sign and magnitude of $\dot v_a = f(v_a,\gamma_a)$ at the boundary of $\mathcal{V}$. To make the Nagumo tangency condition \eqref{eq:nagumo_condition} explicit, we now specialize the airspeed dynamics at the boundary of $\mathcal V$. From \eqref{eq:contingent_cone_interval}, forward invariance of $\mathcal V$ requires,
\begin{equation}
f(v_{\max},\gamma_a) \leq 0 \leq f(v_{\min},\gamma_a), \quad \forall \gamma_g \in \Gamma_{\mathcal{V}} \subseteq \Gamma,
\label{eq:nagumo_boundary_conditions}
\end{equation}
to keep airspeed acceleration pointing inward of the viability domain $\mathcal{V}$ where $\Gamma_{\mathcal{V}}$ is the set of viable guidance commands. 

The contingent cone characterizes admissible instantaneous airspeed rates independently of the applied guidance, whereas the guidance law determines whether the induced airspeed dynamics satisfy this tangency condition. Consequently, an autonomous unpowered UAS must be cognizant of a set of ground-referenced flight path angle commands that guarantee airspeed boundedness, enabling both viability-constrained trajectory planning and real-time guidance. We now formally define the corresponding set of viable ground-referenced flight path angles.
\begin{definition}[Viable Ground-Referenced Flight Path Angle Set]
The set of ground-referenced flight path angle commands that satisfy the Nagumo tangency condition, as a function of the airspeed and wind velocity vectors, is called viable and defined as:
\begin{equation}
\begin{aligned}
\Gamma_{\mathcal V}(\mathbf v_a,\mathbf v_w)
\triangleq
\{
\gamma_g = \mathcal G(\gamma_a;\mathbf v_a,\mathbf v_w) \lvert&
f(v_a,\gamma_a) \in T_{\mathcal V}(v_a),\\
& v_a \in \mathcal V\}.
\end{aligned}
\label{eq:viable_gamma_g_set}
\end{equation}
\end{definition}

One can substitute airspeed dynamics \eqref{eq:va_dot} into \eqref{eq:nagumo_boundary_conditions} to obtain closed-form expressions for viable air-relative flight path angle $\gamma_a$:
\begin{equation}
    \frac{-D(v_{\max},\gamma_a)}{m} - g\sin\gamma_a \leq 0 \leq \frac{-D(v_{\min},\gamma_a)}{m} - g\sin\gamma_a.
    \label{eq:viable_gamma_a_interval}
\end{equation}
Rearranging the inequality, one obtains:
\begin{equation}
    \underbrace{\sin^{-1}\left(\frac{-D(v_{\max},\gamma_a)}{W}\right)}_{\underline{\gamma_a}} \leq \gamma_a \leq \underbrace{\sin^{-1}\left(\frac{-D(v_{\min},\gamma_a)}{W}\right)}_{\overline{\gamma_a}}.
    \label{eq:viable_gamma_a_interval2}
\end{equation}
The mapping $\mathcal{G}$ is monotonically increasing in $\gamma_a$ for constant $\mathbf{v}_a$ and $\mathbf{v}_w$. So the viable air-relative flight path angle set in \eqref{eq:viable_gamma_a_interval2} can be projected onto the viable ground-referenced flight path angle set $\Gamma_{\mathcal{V}}$ through $\mathcal{G}$:
\begin{equation}
    \Gamma_{\mathcal V}(\mathbf{v}_a,\mathbf{v}_w) = \left\{\mathcal{G}(\underline{\gamma_a};\mathbf{v}_a, \mathbf{v}_w) \leq \gamma_g \leq \mathcal{G}( \overline{\gamma_a};\mathbf{v}_a, \mathbf{v}_w)\right\}.
    \label{eq:viable_gamma_g_set2}
\end{equation}
Thus, \eqref{eq:viable_gamma_g_set2} converts the abstract Nagumo tangency condition into an explicit interval constraint on the flight path angle, making airspeed boundedness directly enforceable.

In a coordinated turn under wind, the direction of the airspeed vector $\mathbf{v}_a$ varies continuously while its magnitude remains approximately constant, whereas the wind velocity $\mathbf{v}_w$ is assumed fixed in the inertial frame. As a result, the air-to-ground mapping $\mathcal{G}(\mathbf{v}_a(t),\mathbf{v}_w,\gamma_a)$ becomes time-varying during maneuvering flight. Consequently, the viable ground-referenced flight path angle set $\Gamma_\mathcal{V}$ in \eqref{eq:viable_gamma_g_set2} cannot be enforced as a static constraint exploited in trajectory planning and guidance, and must instead be incorporated within a dynamic, viability-constrained guidance framework. In other words, for any admissible maneuver initialized at an airspeed $v_a \in \mathcal V$ and subject to a steady wind field, there exists a uniquely defined viable guidance set $\Gamma_{\mathcal V}(\mathbf v_a,\mathbf v_w)$ that guarantees forward invariance of the airspeed envelope.

\subsection{Viability-Constrained Guidance Optimization}
\label{sec:viability_ocp}
As discussed above, an unpowered fixed-wing UAS can increase airspeed only by exchanging potential energy for kinetic energy through a steeper flight path angle. It is therefore necessary to identify the set of ground-referenced flight path angle commands $\gamma_g$ that maintain $v_a \in \mathcal{V}$ and guarantee forward invariance of the airspeed envelope, even arbitrarily close to its boundaries. To this end, this section develops a viability-based framework for synthesizing guidance policies that preserve airspeed boundedness during turning maneuvers in steady horizontal wind. The resulting time-varying viability-constrained guidance problem is approximated via a parametric optimization formulation, enabling tractable computation of admissible commands. We begin by defining a parametric maneuver.
\begin{definition}[Maneuver Parameter Space]
Let the maneuver space be defined as $\Sigma \triangleq \mathcal{X} \times \mathbb{R}^3$, where $\Delta\chi \in \mathcal{X} \subseteq [-2\pi,2\pi]$ denotes a commanded course change and $\mathbf v_w \in \mathbb{R}^3$ denotes a steady inertial wind vector. A constant magnitude airspeed turn maneuver is an element in $\Sigma$ as
\begin{equation}
\sigma \triangleq (\Delta\chi,\mathbf{v}_w;\mathbf{v}_a) \in \Sigma.
\end{equation}
\end{definition}
The maneuver parameters specify the kinematic conditions under which guidance must be generated, but do not by themselves ensure airspeed boundedness. Therefore, we derive an optimal guidance law with guaranteed airspeed boundedness.
\begin{definition}[Viability-Constrained Optimal Guidance Policy]
Given a maneuver $\sigma \in \Sigma$ with an initial airspeed $v_a \in \mathcal V$, a viability-constrained optimal guidance policy is a constant ground-referenced flight path angle command $\gamma_g:\Sigma \rightarrow \Gamma_{\mathcal{V}}$ defined over a finite horizon $[0,T(\sigma)]$ such that the induced airspeed satisfies $v_a(t) \in \mathcal V, \quad \forall t \in [0,T(\sigma)]$.
\end{definition}
Here, $T(\sigma) \triangleq \min\left(\lvert \Delta\chi \rvert / \dot{\chi}, \tau \right)$ is the maneuver horizon for turn rate $\dot{\chi} \in \mathbb{R}^+$. In this study, the turn rate is assumed constant and equal to a standard rate of 3 $^\circ$/s, so that the maneuver duration is uniquely determined by the commanded course change. A time constant $\tau \in \mathbb{R}^+$ ensures a well-defined time horizon in case of $\Delta\chi = 0^{\circ}$.

To construct viability-constrained guidance policies in practice, the time varying constraint $\Gamma_{\mathcal V}(\mathbf v_a(t),\mathbf v_w)$ is approximated through a parametric optimization framework. In particular, the maneuver space $\Sigma$ is discretized into representative combinations of course change $\Delta\chi$, wind speed $v_w$, and wind direction $\chi_w$. For each discretized maneuver $\sigma$, we solve a constrained guidance optimization that selects a constant ground-referenced flight path angle while enforcing viability and minimizing a performance cost $J \in \mathbb{R}_{\geq 0}$. The cost function is defined as:
\begin{equation}
\begin{gathered}
J(\gamma_g; \sigma)
\;\triangleq\;
\dot{v}_a^2\!\left(v_a(t),\gamma_a\right)\\
\quad t \in [0,T(\sigma)],\quad \gamma_a = \mathcal{G}^{-1}(\gamma_g;\mathbf{v}_a,\mathbf{v}_w).
\end{gathered}
\label{eq:cost}
\end{equation}
For each maneuver $\sigma \in \Sigma$, we construct a viability-constrained guidance command by solving:
\begin{equation}
\begin{gathered}
\gamma_g^\star(\sigma) =
\operatorname*{argmin}
\int_{0}^{T(\sigma)} J(\gamma_g;\sigma)\,dt \\
\text{s.t. } \gamma_g \in \Gamma_{\mathcal{V}}, \quad \forall t \in [0,T(\sigma)].
\end{gathered}
\label{eq:vc_opt_compact}
\end{equation}
The existence of an optimal guidance command for \eqref{eq:vc_opt_compact} follows from the Weierstrass extreme value theorem \cite{Arora2017-bk}.
\begin{remark}[Existence of an Optimal Solution]
For each maneuver $\sigma\in \Sigma$, the admissible set
$\Gamma_{\mathcal V} \subset \Gamma$ is assumed non-empty and compact. The objective function is time integration of acceleration, a continuous  physical quantity, therefore, continuous with respect to $\gamma_g$ on $\Gamma_{\mathcal V}$:
\begin{equation}
    \Phi(\gamma_g;\sigma) \triangleq \int_{0}^{T(\sigma)} J(\gamma_g;\sigma)\,dt.
\end{equation}
Then, by the Weierstrass extreme value theorem, $\Phi(\cdot;\sigma)$ attains a minimum on $\Gamma_{\mathcal V}$. Consequently, there exists at least one optimal solution
$\gamma_g^\star(\sigma)\in\Gamma_{\mathcal V}$ to the viability-constrained optimization problem~\eqref{eq:vc_opt_compact}.
\end{remark}

\begin{proposition}[Viability of the Optimized Guidance Policy]
\label{prop:viable_opt_policy}
Let $\sigma \in \Sigma$ and suppose $\gamma_g^\star$ satisfies \eqref{eq:vc_opt_compact}. Then the induced airspeed $v_a(t)$ satisfies $v_a(t)\in\mathcal V,\; \forall t\in[0,T(\sigma)]$ and hence $\mathcal V$ is forward invariant over the maneuver horizon.
\end{proposition}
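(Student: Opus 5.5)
The plan is to reduce the statement to Nagumo's theorem (Theorem~1), applied to the closed-loop scalar airspeed dynamics produced by the constant command $\gamma_g^\star$. Once the feasibility constraint of \eqref{eq:vc_opt_compact} is in force, optimality plays no further role: the argument uses only that $\gamma_g^\star(\sigma)\in\Gamma_{\mathcal V}(\mathbf v_a(t),\mathbf v_w)$ for every $t\in[0,T(\sigma)]$.

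\textbf{Closed-loop vector field.} For $t\in[0,T(\sigma)]$ set $\gamma_a(t)=\mathcal G^{-1}(\gamma_g^\star;\mathbf v_a(t),\mathbf v_w)$. This inverse is well defined because $\mathcal G$ is monotonically increasing in $\gamma_a$ for fixed $\mathbf v_a,\mathbf v_w$. The induced dynamics are $\dot v_a=F(t,v_a)\triangleq f(v_a,\gamma_a(t))$. From \eqref{eq:drag_explicit} and \eqref{eq:va_dot}, $f$ is continuous, and in fact $C^1$, on $\Omega$, since $v_a\ge v_{\min}>0$ and $\cos\mu\neq 0$. The heading of $\mathbf v_a$ evolves continuously at the constant turn rate, so $\gamma_a(t)$ is continuous and $F$ is continuous in $t$ and locally Lipschitz in $v_a$. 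The time-varying case is absorbed by augmenting the state with $t$ and taking the set $\mathcal V\times[0,T(\sigma)]$; the time direction then adds only the trivially admissible component $\dot t=1$.

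\textbf{Tangency on the boundary.} Fix any $t$ and suppose $v_a(t)\in\partial\mathcal V$. Membership $\gamma_g^\star\in\Gamma_{\mathcal V}$ means, by \eqref{eq:viable_gamma_g_set2} and the monotonicity of $\mathcal G$, that $\underline{\gamma_a}\le\gamma_a(t)\le\overline{\gamma_a}$. Through \eqref{eq:viable_gamma_a_interval}--\eqref{eq:viable_gamma_a_interval2} this is equivalent to $f(v_{\max},\gamma_a(t))\le 0\le f(v_{\min},\gamma_a(t))$. By \eqref{eq:contingent_cone_interval}, this gives $F(t,v_a)\in T_{\mathcal V}(v_a)$ at both endpoints. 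At interior points the cone is all of $\mathbb R$, so the condition holds trivially. Hence \eqref{eq:nagumo_condition} holds along the maneuver, and $\mathcal V$, being a compact interval and hence locally compact, is a viability domain.

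\textbf{Invariance and the main obstacle.} Theorem~1 yields viability, meaning that some solution from $v_a(0)\in\mathcal V$ remains in $\mathcal V$. The main obstacle is passing from viability to invariance, which requires every solution to remain in $\mathcal V$ and is needed for the actual induced trajectory. I would resolve this with uniqueness. Because $F$ is locally Lipschitz in $v_a$, the solution is unique, so the viable solution is the trajectory itself.

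As a backup, in case the tangency inequalities are only non-strict, I would give a direct argument at $v_{\max}$. Suppose $v_a$ first exceeds $v_{\max}$ at $t_0$. Write $\dot v_a\le F(t,v_{\max})+L(v_a-v_{\max})\le L(v_a-v_{\max})$ for $v_a$ slightly above $v_{\max}$. Grönwall's inequality then gives $v_a-v_{\max}\le 0$ on a neighborhood of $t_0$, which is a contradiction. The symmetric argument handles $v_{\min}$.

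Together these give $v_a(t)\in\mathcal V$ for all $t\in[0,T(\sigma)]$, i.e.\ forward invariance of $\mathcal V$ over the maneuver horizon.
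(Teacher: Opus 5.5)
Your proposal is correct and follows essentially the same route as the paper: the feasibility constraint $\gamma_g^\star\in\Gamma_{\mathcal V}$ along the horizon gives the boundary tangency $f(v_{\max},\gamma_a(t))\le 0\le f(v_{\min},\gamma_a(t))$, and Nagumo's theorem on the compact interval $\mathcal V$ then yields $v_a(t)\in\mathcal V$ on $[0,T(\sigma)]$. Your additions tighten the argument rather than change it: you upgrade viability to invariance via Lipschitz uniqueness (a step the paper's proof skips when it asserts that no trajectory can exit), and you add a direct Gr\"onwall first-exit argument. The one slip is that the time-augmented set should be $\mathcal V\times\mathbb R$ (with $F$ extended past $T(\sigma)$) rather than $\mathcal V\times[0,T(\sigma)]$, since $\dot t=1$ is not in the contingent cone at $t=T(\sigma)$.
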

\begin{proof}
By the constraint $\gamma_g \in \Gamma_{\mathcal{V}}$ in \eqref{eq:vc_opt_compact}, the constant command $\gamma_g^\star(\sigma)$ induces an air-referenced flight path angle $\gamma_a=\mathcal G^{-1}(\gamma_g;\mathbf v_a,\mathbf v_w),\; \forall t\in[0,T(\sigma)]$. The same constraint also enforces the tangency condition in time per \eqref{eq:viable_gamma_g_set}:
\begin{equation}
f\left(v_a,\mathcal{G}^{-1}(\gamma_g;\mathbf{v}_a, \mathbf{v}_w)\right)\in T_{\mathcal V}\left(v_a\right),
\quad \forall t\in[0,T(\sigma)].
\end{equation}
Since $\mathcal V=[v_{\min},v_{\max}]$ is compact and $f$ is continuous in its arguments, Nagumo's theorem implies that any trajectory initialized in $\mathcal V$ cannot exit $\mathcal V$ while the tangency holds. Thus, for any initial condition $v_a\in\mathcal{V}$, the induced airspeed trajectory satisfies $v_a(t)\in\mathcal{V}$ for all $t\in[0,T(\sigma)]$, establishing $\mathcal V$ as the viability domain over the maneuver horizon.
\end{proof}

\section{Numerical Realization of the Viability-Constrained Optimal Guidance Policy}
\label{sec:numeric}
This section bridges the proposed viability-assured ground-referenced guidance framework with its numerical realization. Emphasis is placed on addressing the challenges of numerical implementation in solving \eqref{eq:vc_opt_compact}. Proposition~\ref{prop:viable_opt_policy} assumes pointwise time enforcement of the viability condition, which guarantees forward invariance of the airspeed envelope at all time steps. For a nonlinear UAS model, however, this condition at every time step within an optimization loop is high overhead and can lead to and poor convergence. The time-averaged tangency condition introduced below improves numerical stability of the constraint evaluation, but the optimization still requires repeated dynamic simulation over the maneuver horizon and therefore remains computationally expensive. In the implementation, the tangency condition is therefore approximated by enforcing time-averaged airspeed accelerations at the boundaries of $\mathcal V$ over the maneuver horizon. This aggregated constraint serves as a computationally tractable surrogate that preserves the inward-pointing behavior of the airspeed dynamics in practice while relaxing strict pointwise enforcement.
\begin{definition}[Time-Averaged Tangency Condition]
\label{prop:avg_tangency}
The pointwise tangency condition
$f(v_{\min},\gamma_a) \geq 0$ and $f(v_{\max},\gamma_a) \leq 0$, $\forall t \in [0, T(\sigma)]$, is approximated by net airspeed change over the maneuver horizon per \eqref{eq:avg_tangency} where $\partial\mathcal{V}$ denotes airspeed boundaries:
\begin{equation}
\begin{gathered}
\tilde f(v_a(0),\gamma_a) \triangleq \frac{1}{T(\sigma)} \int_0^{T(\sigma)} f(v_a(t),\gamma_a)dt\\
v_a(0) = \partial \mathcal{V}.
\end{gathered}
\label{eq:avg_tangency}
\end{equation}
\end{definition}

In practice, airspeed trajectory $v_a(t)$ is obtained from a nonlinear simulation and is sampled at non-uniform time instants due to variable-step integration. To robustly approximate~\eqref{eq:avg_tangency}, the simulated trajectory is first resampled onto a uniform time grid. Let $\{(t_i, v_a(t_i))\}_{i=0}^N$ denote the simulated airspeed samples over the maneuver horizon $[0,T(\sigma)]$. A uniformly spaced time grid,
\begin{equation}
\mathcal T_\Delta
\triangleq
\left\{ \tau_k = k\Delta t \;\middle|\; k=0,\dots,K,\ \tau_K \le T(\sigma) \right\},
\end{equation}
is constructed with fixed step size $\Delta t>0$. The airspeed signal is reconstructed on $\mathcal T_\Delta$ using linear interpolation:
\begin{equation}
\tilde v_a(\tau_k)
\triangleq
\mathcal{I}\!\left(\{t_i,v_a(t_i)\}, \tau_k\right),
\end{equation}
where $\mathcal{I}(\cdot)$ denotes linear interpolation. To mitigate numerical differentiation noise arising from dynamic transients, a finite-window moving average is applied to the interpolated signal as:
\begin{equation}
\bar v_a(\tau_k)
\triangleq
\frac{1}{2L+1}
\sum_{j=-L}^{L}
\tilde v_a(\tau_{k+j}),
\label{eq:moving_average}
\end{equation}
where $L\in\mathbb N$ defines the half-width of the averaging window. The airspeed derivative is then approximated using a first-order central finite-difference scheme on the uniform grid:
\begin{equation}
\dot v_a(\tau_k)
\approx
\frac{\bar v_a(\tau_{k+1}) - \bar v_a(\tau_{k-1})}{2\Delta t}.
\label{eq:finite_difference}
\end{equation}
Finally, the time-averaged tangency condition~\eqref{eq:avg_tangency} is approximated by the discrete mean:
\begin{equation}
\tilde f\bigl(v_a(0),\gamma_a\bigr)
\approx
\frac{1}{K}
\sum_{k=0}^{K-1}
\dot v_a(\tau_k),
\label{eq:avg_tangency_discrete}
\end{equation}
which is enforced at airspeed boundaries $v_a(0)\in \partial\mathcal{V}$ within the optimization. The  smoothing operations improve numerical robustness, ensuring a reliable approximation of the inward-pointing viability condition. For sufficiently small $\Delta t$ and short averaging window $L$, linear interpolation and moving-average smoothing converge to the original signal:
\begin{equation}
    \lim_{\Delta t \to 0} \tilde v_a(\tau_k) = v_a(\tau_k),\; \lim_{L \to 0} \bar v_a(\tau_k) = \tilde v_a(\tau_k).
\end{equation}
Consequently, the discrete estimate \eqref{eq:avg_tangency_discrete} consistently approximates the continuous tangency condition \eqref{eq:avg_tangency}. While exact forward invariance of the airspeed envelope is not guaranteed in general under this relaxed enforcement, the proposed formulation provides a numerically robust and computationally efficient surrogate that preserves inward-pointing behavior in practice and enables reliable guidance-level airspeed feasibility.
\section{Application}
\label{sec:application}
This section presents the application of the proposed viability-constrained guidance framework to a 6-DoF dynamic simulation of a Cessna 182 under complete loss of thrust, serving as a representative aerial glider-like platform. 

Figure \ref{fig:flowchart} illustrates how the methodology is embedded within an autonomous UAS path planning pipeline to ensure dynamic feasibility. User-defined horizontal maneuvers, discretized over lateral motion and steady wind conditions, are first supplied to an optimization solver that utilizes the dynamic model for objective and gradient evaluations. The resulting optimal ground-referenced flight path angles $\gamma_g^\star$ are associated with their corresponding discretized maneuvers $\sigma$ to form a set of viable path planning maneuvers $\sigma_p$. A path planner then exploits this maneuver set to generate a trajectory for the autonomous UAS. These planned paths are subsequently executed in simulation, and the resulting airspeed trajectories are evaluated with respect to forward invariance of the prescribed airspeed envelope.
\begin{figure*}[t!]
\centering
\begin{tikzpicture}[>=Latex]

\node[block] (opt) {Viability-constrained\\ Guidance Optimization\\
$
\begin{gathered}
\gamma_g^\star(\sigma) =
\operatorname*{argmin}
\int_{0}^{T(\sigma)} J(\gamma_g;\sigma)\,dt \\[1mm]
\text{s.t. } \gamma_g \in \Gamma_{\mathcal{V}}, \quad \forall t \in [0,T(\sigma)]
\end{gathered}
$};

\node[block, above=7mm of opt, xshift=-7mm] (input) {Discrete maneuvers\\$\sigma \in \Sigma$};

\node[block, right=20mm of opt] (path) {Path\\planning};

\node[block, right=15mm of path] (sim) {Dynamic simulation\\\includegraphics[width=20mm]{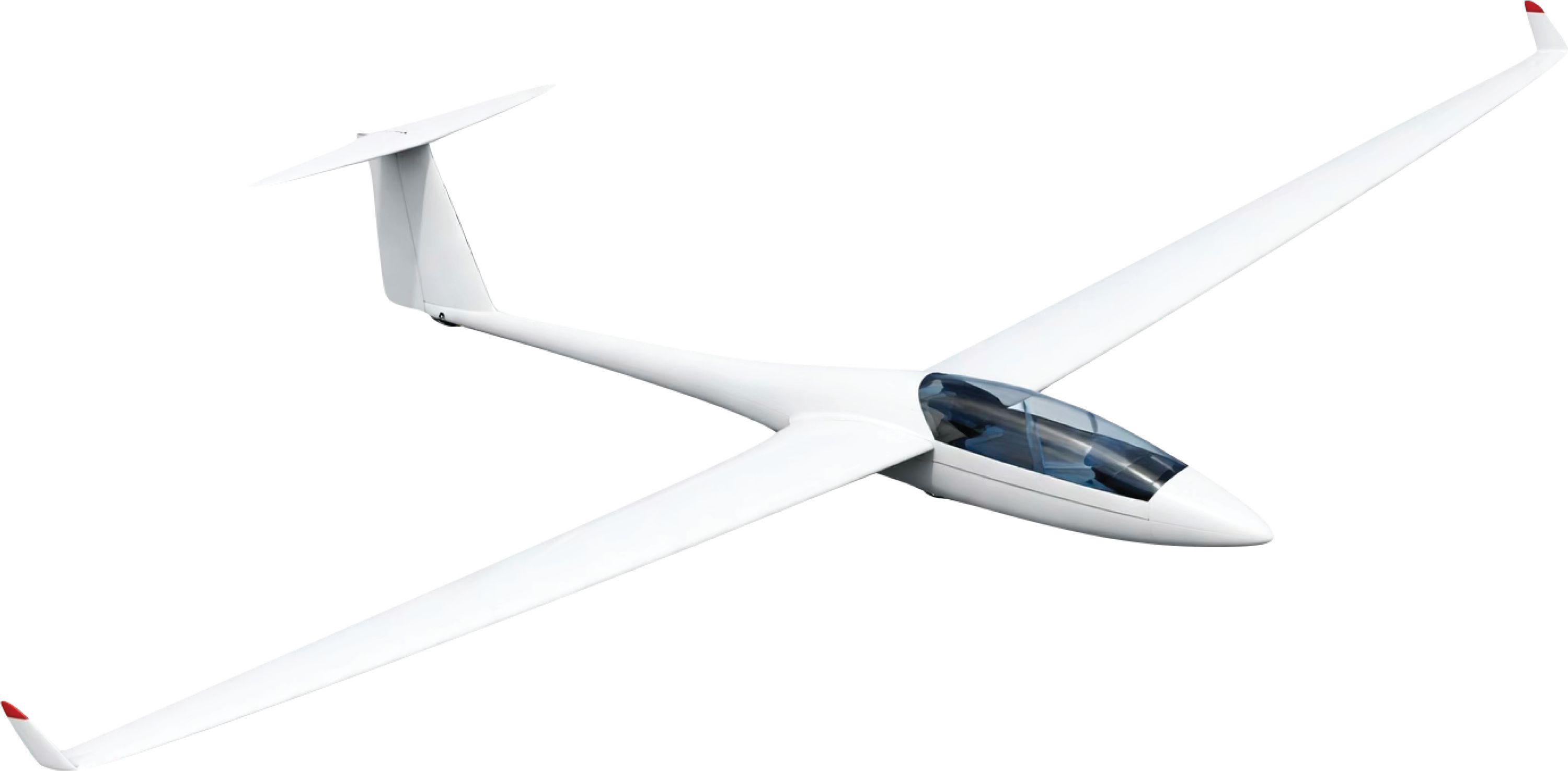}};

\node[block, right=15mm of sim] (post) {Airspeed\\forward invariance\\analysis};

\draw[<->] (sim.north) |- ++(0mm, 8mm)
  -|  node[pos=0.25, above=1pt, align=center, font=\scriptsize]{Dynamic optimization} (opt.north);

\draw[arrow] (input.south) -- ([xshift=-7mm]opt.north);

\draw[arrow](opt.east) -- node[midway, above, align=center, font=\scriptsize]{Viable set of\\ path maneuvers\\$\sigma_p$} (path.west);

\draw[arrow](path.east) -- node[midway, above, align=center, font=\scriptsize ]{Viable path} (sim.west);

\draw[arrow](sim.east) -- node[midway, above, align=center, font=\scriptsize ]{Airspeed\\ trajectory\\$v_a(t)$} (post.west);

\end{tikzpicture}
\caption{Application flowchart of the viability-constrained guidance, path planning, and dynamic simulation.}
\label{fig:flowchart}
\end{figure*}

\subsection{Scenario Description}
This section briefly introduces the path generation method used to evaluate the proposed airspeed viability framework. Although the viability-assured guidance approach is general and independent of the path planning strategy, a search-based planner for gliding UAS~\cite{tekaslan_search} is employed to generate ground-referenced paths. The planner expands the search by applying a predefined set of lateral maneuvers to generate successor states that connect an initial state to a goal through a sequence of positions. Per Figure ~\ref{fig:state_expansion}, distinct course changes applied at a parent state $s$ produce different successor states $s^j$, each generally requiring a different ground-referenced flight path angle.
\begin{figure}
    \centering
    \includegraphics[width=.8\linewidth]{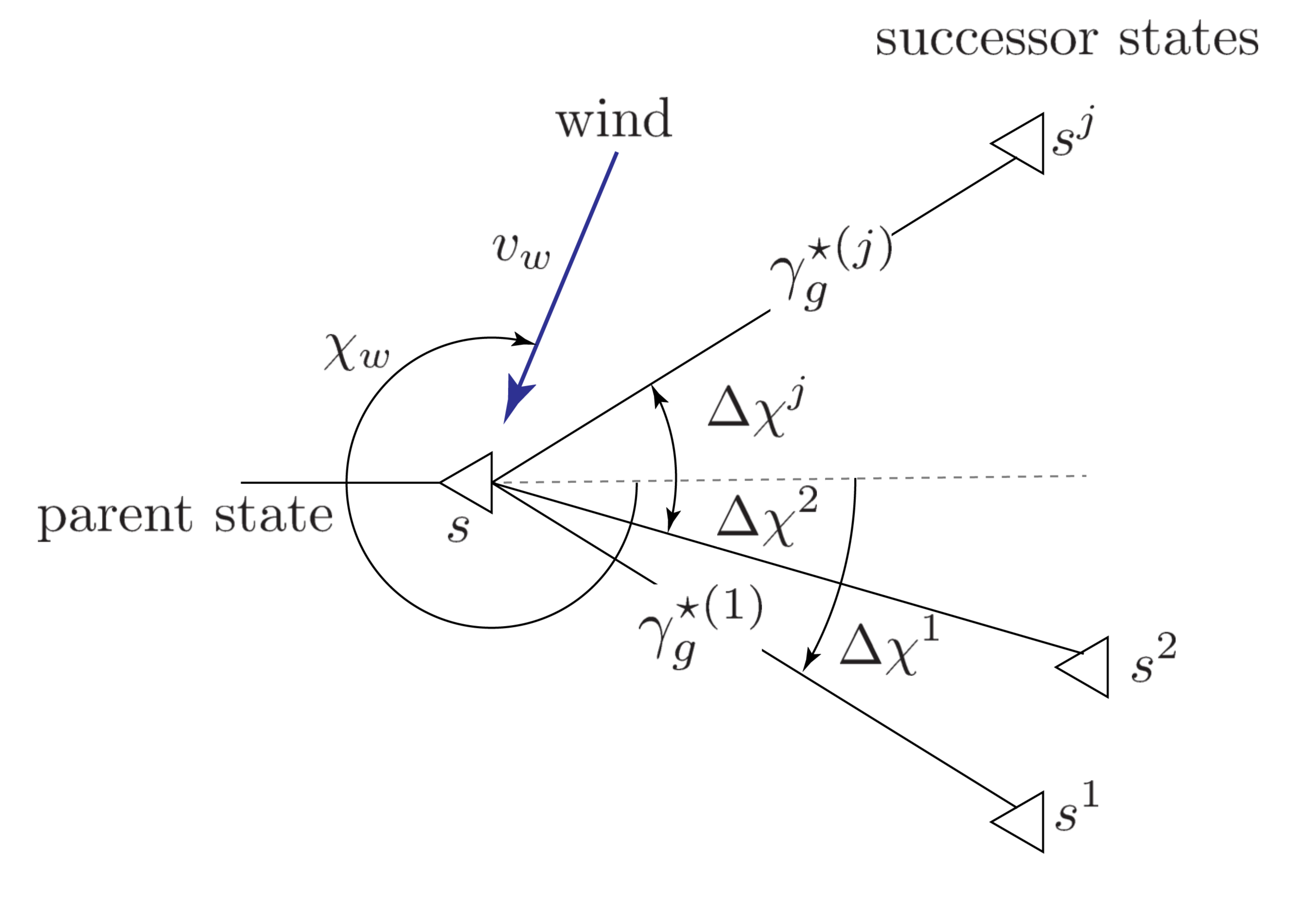}
    \caption{State expansion under steady horizontal wind.}
    \label{fig:state_expansion}
\end{figure}
For each candidate transition, the corresponding  $\gamma_g^\star$ is computed using the proposed viability-constrained guidance optimization. A complete path is then formed by concatenating these maneuver primitives.

Since each lateral maneuver $\sigma$ executed under a given wind condition and constant airspeed induces a unique $\gamma_g$, the maneuver definition is augmented for path generation. We therefore define a planner maneuver $\sigma_p$ as:
\begin{equation}
    \sigma_p \triangleq (\sigma, \gamma_g^\star)
    = (\Delta\chi, \mathbf{v}_w, \gamma_g^\star;\, \mathbf{v}_a),
\end{equation}
which fully specifies the UAS motion in both the horizontal and vertical planes through the course change $\Delta\chi$ and the associated $\gamma_g^\star$. In this way, the planner generates trajectories that are inherently compatible with airspeed-viable guidance.

\subsection{Dynamic Aircraft Model}
A 6-DoF nonlinear simulation environment is built for a Cessna 182, with stability derivatives taken from~\cite{napolitano} and the full equations of motion given in~\cite{beard_mclain}. A conventional proportional–integral–derivative attitude controller implements the flight management system. The guidance architecture employs fly-by waypoint navigation with straight-line and orbit tracking, together with half-plane waypoint switching for lateral path following~\cite{beard_mclain}. In the vertical channel, the planned optimal flight path angle serves as the reference for each trajectory segment. Complete loss of thrust is modeled by enforcing zero thrust, effectively rendering the UAS unpowered; consequently, airspeed is not directly regulated and is influenced solely through the commanded $\gamma_g^\star$.

\subsection{Numerical Setup}
A sequential quadratic programming method is used to solve the resulting nonlinear program, with Jacobians approximated via a central finite-difference scheme. The first-order optimality, constraint violation, and step tolerances are each set to $10^{-6}$, together with a fixed finite-difference step size. The optimization is evaluated over the feasible set $\Gamma = [-10^\circ,0^\circ]$ at a constant reference airspeed $v_a^\star = 90$ knots, with the viability domain defined as $\mathcal{V} = [80,100]$ knots. Course changes are sampled uniformly as $\Delta\chi \in [-90^\circ,90^\circ]$ in $15^\circ$ increments. Wind speeds are sampled at nine equally spaced values in $[0,15.6]$~knots, and wind directions are sampled uniformly over $[-\pi,\pi)$ with increments of $\pi/12$. For each combination of course change, wind speed, and wind direction, the viability-constrained optimization problem~\eqref{eq:vc_opt_compact} is solved in parallel using 36 processors over approximately three hours in MATLAB, yielding a total of 2808 distinct maneuver cases.

\subsection{Viability-constrained Guidance Policy for a Cessna 182 with Complete Loss-of-thrust}
Figure~\ref{fig:gamma_dataset} illustrates a representative subset of viability-assured maneuvers obtained by solving \eqref{eq:vc_opt_compact}. Each polar subplot corresponds to a course change $\Delta\chi$, indicated in the top-left corner, and depicts the associated $\gamma_g^\star$. The radial coordinate represents wind speed $v_w$, increasing outward from the origin, while the angular coordinate denotes the relative wind direction $\chi_w$, where $\chi_w = 0^\circ$ corresponds to a headwind and $\chi_w = \pm 180^\circ$ corresponds to a tailwind.
\begin{figure*}[t!]
     \centering
    \includegraphics[width=.9\linewidth]{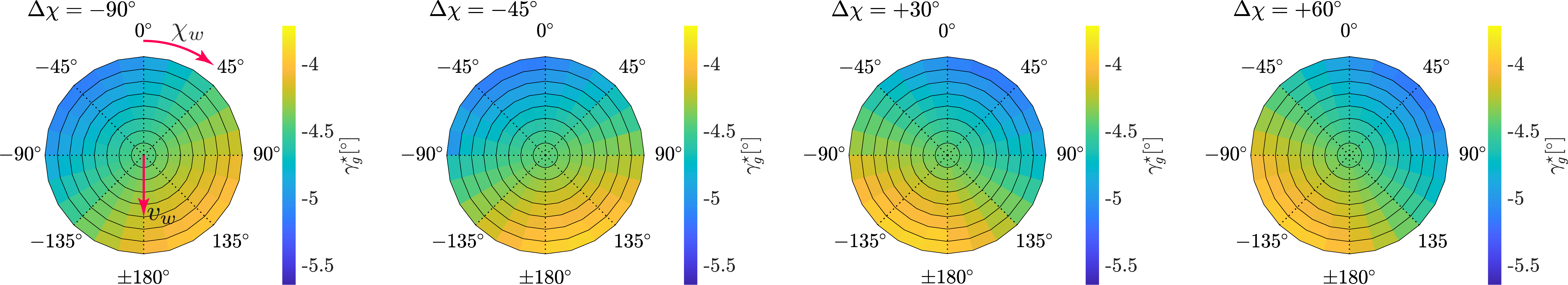}
    \caption{A subset of viability-constrained optimal ground-referenced maneuvers.}
    \label{fig:gamma_dataset}
\end{figure*}
As discussed in Section~\ref{sec:wind-coupling}, results demonstrate that maintaining airspeed boundedness during turning maneuvers under steady wind generally requires non-constant ground-referenced guidance commands. In particular, the admissible $\gamma_g^\star$ varies systematically with both wind speed and direction, reflecting the nonlinear coupling between air- and ground-referenced dynamics captured by the viability constraints.

Figure~\ref{fig:nagumo_const} shows the empirical distribution of the tangency conditions evaluated at the airspeed bounds over the synthesized maneuver set. The blue histogram corresponds to $\tilde{f}(v_{\min},\gamma_a)$, which must remain nonnegative to prevent airspeed decay at the lower boundary, while the red histogram corresponds to $\tilde{f}(v_{\max},\gamma_a)$, which must remain non-positive to prevent airspeed growth at the upper boundary. All samples satisfy the required sign conditions with nonzero margin, indicating strict satisfaction of forward invariance condition across the evaluated wind and maneuver scenarios. This result confirms that the derived guidance commands enforce airspeed boundedness robustly rather than merely satisfying the tangency condition at equality.
\begin{figure}[htb!]
     \centering
    \includegraphics[width=\linewidth]{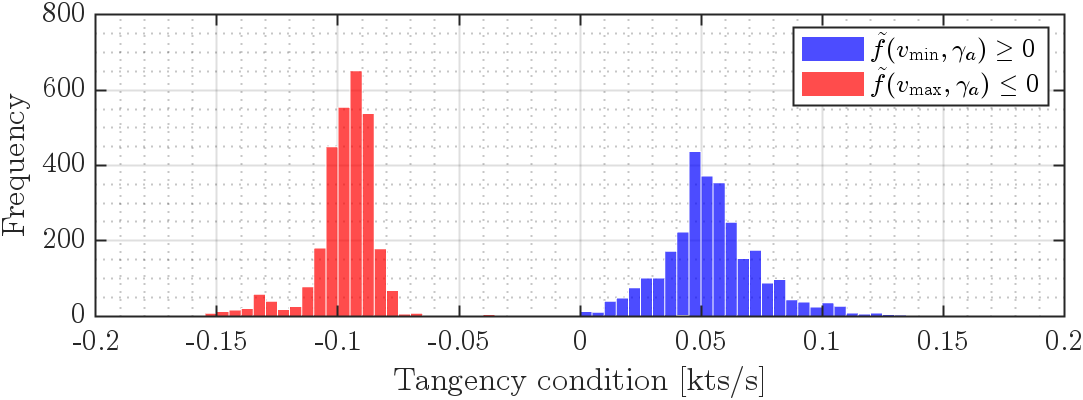}
    \caption{Empirical distribution of Nagumo tangency conditions evaluated at the airspeed bounds.}
    \label{fig:nagumo_const}
\end{figure}

\subsection{Use Case Evaluation and Airspeed Safety Benchmarking}
The optimal guidance set is numerically validated through  542 unique trajectories, totaling over 81 hours of gliding flight. Initial test altitudes differ from 3000 ft to 10,000 ft mean sea level descending to varying elevation. Top-view of all non-ascending trajectories are shown in Figure~\ref{fig:cases} where one representative case is highlighted for which a time-history of UAS dynamic response is given in Figure~\ref{fig:response}.
\begin{figure}[hbt!]
     \centering
    \includegraphics[width=0.8\linewidth]{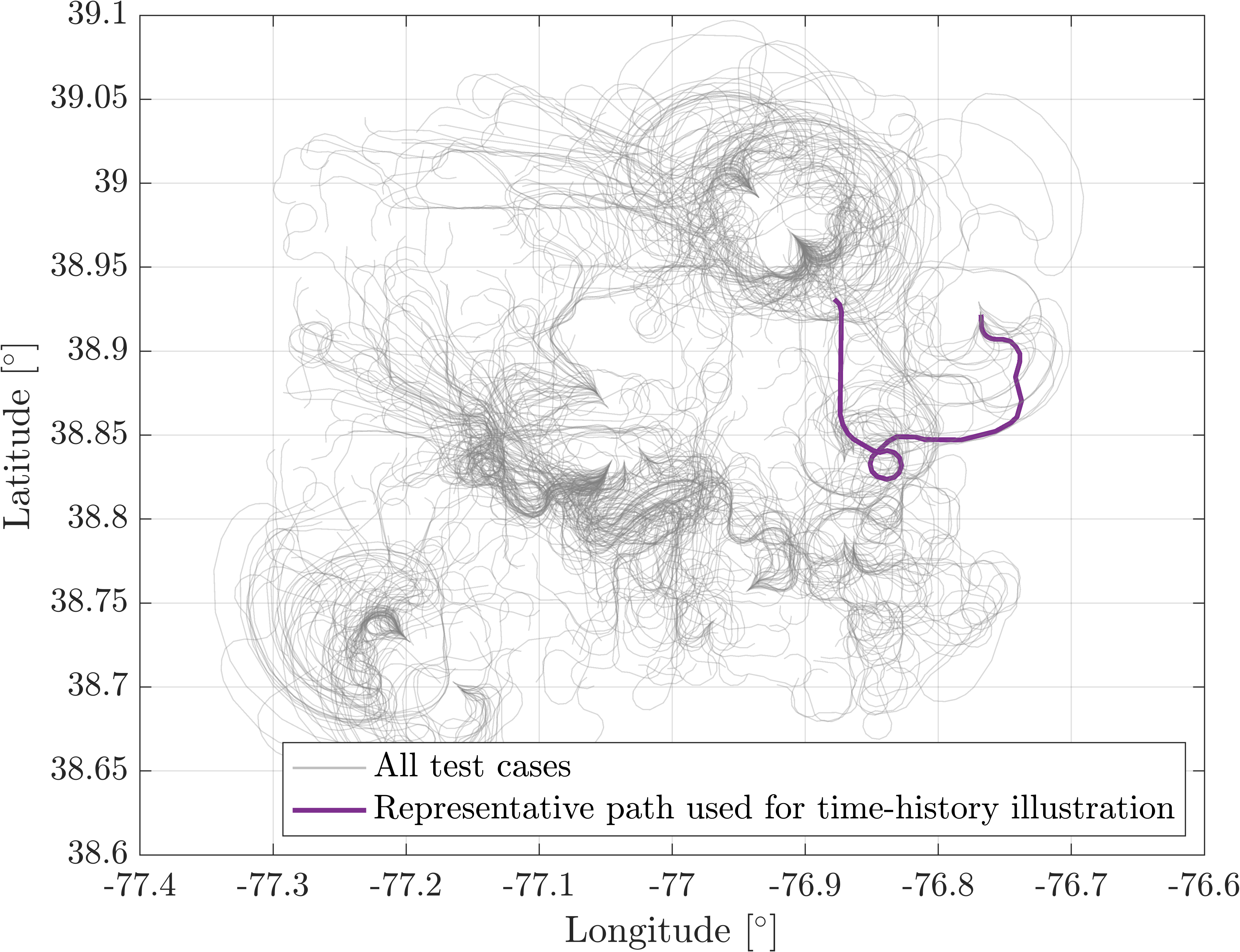}
    \caption{Top-view of all non-ascending test trajectories.}
    \label{fig:cases}
\end{figure}
The top panel shows the airspeed trajectory together with the bounds of the viable set $\mathcal V = [v_{\min},\, v_{\max}]$.
Despite frequent maneuver changes and continuously varying wind direction, the airspeed remains strictly within $\mathcal V$ over the entire horizon. The middle panel depicts the commanded and realized ground-referenced flight path angle $\gamma_g^\star$. Its variation reflects the maneuver-dependent and wind-dependent nature of the Nagumo-admissible set $\Gamma_{\mathcal V}$: for each maneuver and instantaneous wind geometry, $\gamma_g^\star$ is selected from $\Gamma_{\mathcal V}$ to ensure inward-pointing airspeed dynamics. The close agreement between command and response confirms that the viable direction is accurately realized. The bottom panel shows the course angle evolution, confirming the presence of multiple turns interspersed with straight segments. Each change in course alters the relative wind direction and hence the mapping between air- and ground-referenced flight path angles. The maintained airspeed boundedness across these transitions demonstrates that the proposed optimal ground-referenced guidance policy enforces the Nagumo condition locally in time, ensuring airspeed safety under arbitrary maneuver sequencing.
\begin{figure}[htb!]
     \centering
    \includegraphics[width=.8\linewidth]{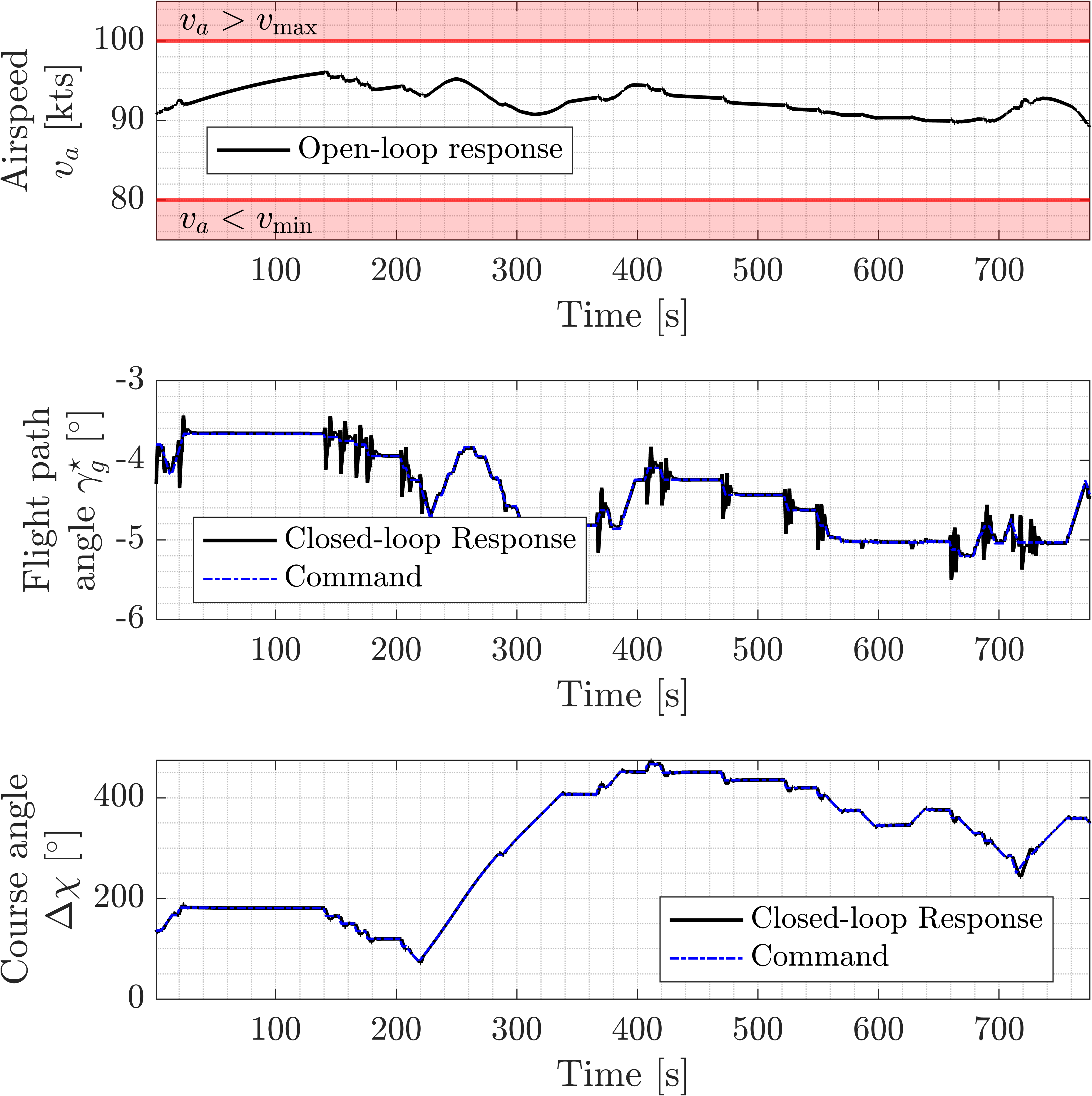}
    \caption{UAS response time history for the representative case.}
    \label{fig:response}
\end{figure}

Finally, robustness is evaluated through 542 dynamic simulations spanning a broad range of arbitrary maneuver sequences, totaling over 81 hours of cumulative gliding flight time, as shown in Fig.~\ref{fig:cases}. All scenarios are conducted under a northerly wind condition, $\mathbf{v}_w = [-15\;0\;0]^\intercal$ knots. Figure~\ref{fig:dist} summarizes the resulting airspeed behavior, providing a statistical assessment of airspeed viability under the proposed Nagumo-constrained guidance policy aggregated across all simulations and time steps.
\begin{figure}[htb!]
     \centering
    \includegraphics[width=\linewidth]{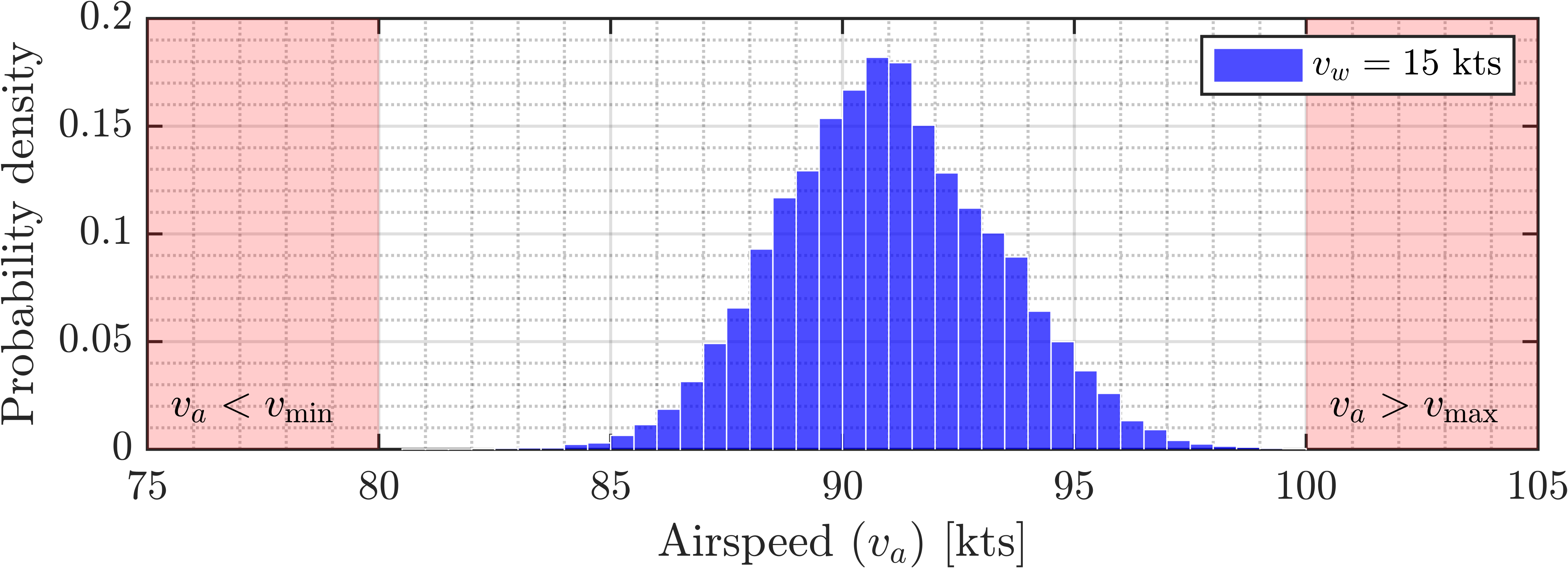}
    \caption{Airspeed distribution across all simulations and time steps, totaling over 81 hours of cumulative gliding flight data.}
    \label{fig:dist}
\end{figure}
The panel shows the empirical probability density of airspeed $v_a$. The entire probability mass lies strictly within the viability domain $\mathcal V$. Quantitatively, the absolute minimum and maximum observed airspeeds are $80.76$ and $99.98$ knots, respectively, with mean $90.99$ knots and standard deviation of $2.32$ knots.

\section{Conclusion}
\label{sec:conclusion}
This work introduced a Nagumo-based viability framework to enforce airspeed safety of unpowered fixed-wing UAS through ground-referenced guidance. The resulting admissible flight path angle policy provides a formal forward invariance guarantee for a prescribed airspeed envelope under maneuvering flight and steady wind. Simulations demonstrate that open-loop airspeed remains strictly within bounds across randomized maneuver sequences. Overall, guidance-level viability analysis can enforce airspeed safety without online control filtering, replanning, or simulation-based feasibility repair, even under strong wind. Also, while demonstrated on an unpowered UAS, the analysis applies directly to any system in which guidance commands induce safety-critical state dynamics under environmental coupling. 

Under unsteady wind, forward invariance must be enforced with respect to the instantaneous wind realization. The proposed framework remains applicable, but the admissible guidance set becomes wind-dependent, requiring the planner to account for the disturbance when generating trajectories. If the wind field is not known, conservative bounds must be enforced, which reduce the viable set and may lead to infeasible planning problems. Hardware-in-the-loop experiments and flight tests are important to evaluate the effect of model mismatch, actuator dynamics, and state-estimation errors on the admissible guidance set, and to verify that the forward-invariance guarantees derived from the model remain valid under realistic closed-loop operation. Future work will address forward invariance under non-stationary wind fields and experimental validation.

\section*{ACKNOWLEDGMENT}
This work was supported in part by NASA Langley Research Center RSES contract 80LARC23DA003.

\addtolength{\textheight}{-12cm}   

\bibliography{references}

\end{document}